\newtheorem{theorem}{Theorem}
\newtheorem{definition}[theorem]{Definition}
\def\FullBox{\hbox{\vrule width 8pt height 8pt depth 0pt}}
\newcommand{\qed}{\;\;\;\FullBox}
\newenvironment{proof}{\noindent{\bf Proof:~~}}{\qed}
\newcommand{\bbE}{\mathbb{E}}
\newcommand{\bbR}{\mathbb{R}}
\newcommand{\calC}{\mathcal{C}}
\newcommand{\poly}{\mathrm{poly}}
\newcommand{\NP}{\mathsf{NP}}
\title{Conditional Sparse Linear Regression}
\author{Brendan Juba\thanks{Supported by an AFOSR Young Investigator Award.}\\
Washington University in St. Louis\\{\tt bjuba@wustl.edu}}
\begin{document}

\maketitle

\begin{abstract}
Machine learning and statistics typically focus on building models that capture 
the vast majority of the data, possibly ignoring a small subset of data as 
``noise'' or ``outliers.'' By contrast, here we consider the problem of 
{\em jointly} identifying a significant (but perhaps small) segment of a 
population in which there is a highly sparse linear regression fit, together 
with the coefficients for the linear fit. We contend that such tasks are of 
interest both because the models themselves may be able to achieve better
predictions in such special cases, but also because they may aid our 
understanding of the data. We give algorithms for such problems under the sup 
norm, when this unknown segment of the population is described by a $k$-DNF 
condition and the regression fit is $s$-sparse for constant $k$ and $s$. For the
variants of this problem when the regression fit is {\em not} so sparse or using
expected error, we also give a preliminary algorithm and highlight the question 
as a challenge for future work.
\end{abstract}


\section{Introduction}
{\em Linear regression,} the fitting of linear relationships among variables in 
a data set, is a standard tool in data analysis. In particular, for the sake of
interpretability and utility in further analysis, we desire to find {\em highly 
sparse} linear relationships, i.e., involving only a few variables. 
Of course, such simple linear relationships often will not hold across an entire
population. But, more frequently there will exist conditions -- perhaps a range
of parameters or a segment of a larger population -- under which such sparse 
models fit the data quite well. For example, Rosenfeld et al.~\cite{rgh+15}
used data mining heuristics to identify small segments of a population in which 
a few additional risk factors were highly predictive of certain kinds of cancer,
whereas these same risk factors were not significant in the overall population. 
Simple rules for special cases may also hint at the more complex general rules.
More generally, we need to develop new techniques to reason about populations in
which most members are atypical in some way, which are colloquially (and 
somewhat abusively) referred to as {\em long-tailed} distributions. We are 
seeking principled alternatives to ad-hoc approaches such as trying a variety 
of methods for clustering the data and hoping that the identified clusters can
be modeled well.

\subsection{Our results}
In this work we consider the design and analysis of efficient 
algorithms for the {\em joint} task of identifying significant segments of a 
population in which a sparse model provides a good fit. We are able to identify
such segments when they are described by a $k$-DNF and there is a $s$-sparse
regression fit for constant $k$ and $s$. More specifically, we give algorithms
when there is a linear relationship with respect to which the error is bounded 
by $\epsilon$ with probability~$1$ (i.e., $\epsilon$ sup norm). In this case, we
find a condition in which the error is bounded by $\epsilon$ for a $1-\gamma$ 
fraction of the population (with probability $1-\delta$ over the sample of 
data).
\begin{theorem}[Conditional sparse linear regression]\label{mainthm}
Suppose that $D$ is a joint probability distribution over $\vec{x}\in\{0,1\}^n$,
$\vec{y}\in\bbR^d$, and $z\in\bbR$ such that there is a $k$-DNF $c$ for
which for some $s$-sparse $\vec{a}\in\bbR^d$ 
\[
\Pr_{(x,y,z)\in D}\left[|\langle\vec{a},\vec{y}\rangle-z|\leq\epsilon
\mathlarger{\mathlarger{|}}c(\vec{x})=1\right]=1
\qquad\text{and}\qquad
\Pr_{(x,y,z)\in D}[c(\vec{x})=1]\geq\mu.
\]
Then given $\epsilon$, $\mu$, and $\delta$ in $(0,1)$, $\gamma\in (0,1/2]$, and 
access to examples from $D$, for any constants $s$ and $k$, there is an 
algorithm that runs in polynomial time in $n$, $d$, $1/\mu$, $1/\gamma$, and
$\log 1/\delta$, and finds an $s$-sparse $\vec{a}'$ and $k$-DNF $c'$ such that 
with probability $1-\delta$,
\[
\Pr_{(x,y,z)\in D}\left[|\langle\vec{a}',\vec{y}\rangle-z|\leq\epsilon
\mathlarger{\mathlarger{|}}c'(\vec{x})=1\right]\geq 1-\gamma
\qquad\text{and}\qquad
\Pr_{(x,y,z)\in D}[c'(\vec{x})=1]\geq(1-\gamma)\mu.
\]
\end{theorem}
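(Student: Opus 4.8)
The plan is to reduce the joint search to a finite enumeration and then exploit the fact that, under the sup-norm hypothesis, the true condition is \emph{exactly} consistent with the good examples. First I would fix the support: since $\vec{a}$ is $s$-sparse, I enumerate all $O(d^s)$ subsets $\sigma\subseteq[d]$ of size $s$, one of which is the support $\sigma^*$ of the promised $\vec{a}$. For a fixed $\sigma$ the unknowns are the $s$ real coefficients of $\vec{a}$ supported on $\sigma$. Each sample point $(\vec{x},\vec{y},z)$ is ``good'' for a coefficient vector exactly when $|\langle\vec{a},\vec{y}_\sigma\rangle - z|\le\epsilon$, which is the region between two parallel hyperplanes in the $s$-dimensional coefficient space. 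The $2m$ hyperplanes arising from a sample of size $m$ cut this space into $O(m^s)$ cells, and within each cell the good/bad labeling of the sample is constant; enumerating one representative $\vec{a}'$ per cell (standard for constant $s$) therefore produces all $O(m^s)$ distinct labelings, among them the true labeling induced by the promised $\vec{a}$ in the cell containing it.

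Given a candidate coefficient vector $\vec{a}'$ and its induced labeling, I would build the condition greedily: let $c'$ be the OR of \emph{all} $k$-terms (there are $N=O(n^k)$ of them) that cover no sample point labeled bad for $\vec{a}'$. This single choice simultaneously serves both goals. For \emph{coverage}, observe that when $\sigma=\sigma^*$ and $\vec{a}'$ lies in the cell of the promised $\vec{a}$, every term of the true DNF $c$ covers only good examples (any $\vec{x}$ satisfying a term of $c$ satisfies $c$, hence has error $\le\epsilon$ with probability $1$), so all of $c$'s terms survive and $c'\supseteq c$; its empirical coverage is thus at least that of $c$, which is $\ge\mu$ up to sampling error. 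For \emph{error}, the construction guarantees $c'$ covers \emph{no} bad sample point, so a one-sided uniform-convergence bound over the $O(d^s n^k)$ pairs $(\sigma,t)$ --- the relevant slab-complement-intersect-term classes have VC dimension $O(s)$ --- shows each surviving term has true bad-mass at most $\tau=\tilde O(\log(1/\delta)/m)$. The algorithm outputs any candidate whose empirical coverage is at least $(1-\gamma/2)\mu$.

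For correctness I would combine two generalization arguments. The class of all $k$-DNFs has VC dimension at most $N=O(n^k)$, so with $m=\poly(n^k,1/(\gamma\mu),\log 1/\delta)$ samples the empirical coverage of every $k$-DNF --- in particular each data-dependent $c'$ --- is within $\gamma\mu/4$ of its true value; this certifies that the accepted candidate has true coverage $\ge(1-\gamma)\mu$ and that the true-labeling candidate passes the acceptance test. For the conditional error of the accepted candidate, since $c'$ is an OR of at most $N$ surviving terms each of bad-mass $\le\tau$, a union bound gives total bad-mass $\le N\tau$; choosing $m=\tilde\Theta\!\left(N\log(N/\delta)/(\gamma\mu)\right)$ forces $N\tau\le\gamma\mu/2$, so conditioned on $c'$ (whose mass is $\ge(1-\gamma)\mu\ge\mu/2$) the fraction of bad points is at most $\gamma$. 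All enumerations are polynomial for constant $s,k$, giving the claimed running time.

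The main obstacle is precisely the joint, continuous-and-combinatorial nature of the search: $\vec{a}$ ranges over a continuum while $c'$ ranges over exponentially many DNFs. The two ideas that break the deadlock are (i) discretizing $\vec{a}$ through the hyperplane arrangement so that only polynomially many labelings matter, and (ii) using the probability-$1$ (sup-norm) promise to make ``clean'' terms a sound notion --- each term of the true condition is error-free, so taking the OR of all empirically error-free terms both recovers enough coverage and, crucially, keeps the \emph{aggregate} bad-mass controlled. Point (ii) is the delicate one: individual terms may each contribute a little error, and it is the union bound over all $N=O(n^k)$ terms that dictates the near-linear-in-$N$ sample complexity; this is also exactly where the argument would break for the expected-error variant, since there the true terms are no longer error-free and the clean-term notion loses its soundness.
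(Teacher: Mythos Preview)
Your proposal is correct and follows essentially the same architecture as the paper's proof: enumerate the $O(d^s)$ supports, discretize the continuous search over $\vec{a}$ to polynomially many candidates, build $c'$ as the OR of all $k$-terms that hit no bad example, and finish with VC-type generalization. The only differences are in the implementation of two steps. For discretizing $\vec{a}$, you enumerate the $O(m^s)$ cells of the hyperplane arrangement, whereas the paper observes that the LP $\min\epsilon'$ s.t.\ $|\langle\vec{a},\vec{y}^{(j)}\rangle-z^{(j)}|\le\epsilon'$ over the (unknown) good set attains its optimum at a basic feasible solution determined by $s{+}1$ tight constraints, and so enumerates all $\binom{m}{s+1}\cdot 2^{s+1}$ choices of examples and signs; the paper's version sidesteps the boundary-face issue you would otherwise need to handle. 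For soundness, you bound the bad-mass term-by-term (VC dimension $O(s)$ per fixed term, then union bound over $O(n^k)$ terms), while the paper bounds the VC dimension of the combined class $\big[|\langle\vec{a}',\vec{y}\rangle-z|\le\epsilon\big]\vee\neg h'(\vec{x})$ directly as $O(s\log d+n^k)$ and applies a single realizable-learning bound; this yields the slightly tighter sample complexity $m=O\!\big(\frac{1}{\mu\gamma}(s\log d+n^k+\log\frac{1}{\delta})\big)$ versus your $\tilde O(n^k/(\mu\gamma))$, but both are polynomial and prove the stated theorem.
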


Our algorithms make crucial use of the sought solution's sparsity. The key
observation is that since the linear rule has constant sparsity, with respect
to the relevant dimensions there are a constant number of ``extremal examples''
such that we can obtain low error on the unknown event by fitting these extremal
examples. We can then use the linear rule we obtain from fitting such a set of 
examples to label the data according to whether or not that point has low error 
under the linear rule. Finally, this enables us to find an event on which the 
linear rule has low error. Thus, it suffices to simply perform a search over 
candidates for the extremal examples and return one for which the corresponding 
event captures enough of the data.

We also note a trivial (weak) approximation algorithm for an expected-error 
variant of the problem that does not rely on sparsity: when there is a $k$-DNF
$c$ and a linear rule $a$ giving conditional expected error $\epsilon$ (and
$c$ is true with probability $\mu$), we find a condition $c'$ and a linear rule 
$a'$ with conditional expected error $O(n^k\epsilon)$ and probability 
$\Omega(\mu/n^k)$. We pose the design of better algorithms for the dense 
regression and expected-error tasks as challenges for future work.

\subsection{Related work}
We are building on recent work by Juba~\cite{juba16} on identifying potentially
rare events of interest in a distribution, which captures a family of data
mining tasks (including, e.g., association rule discovery~\cite{amstv96} or
``bump hunting''~\cite{ff99}). This work is closely related to theoretical work 
on {\em positive-reliable learning}~\cite{kkm12,kt14}, which is in turn very 
closely related to the ``heuristic learning'' model introduced by Pitt and 
Valiant~\cite{pv88} and studied in depth by Bshouty and Burroughs~\cite{bb05}: 
these are models of classification in which one type of error is minimized 
subject to a hard bound on the other type of error. The key difference is 
essentially that the work by Juba, like the work in data mining, focuses on 
bounding the error {\em conditioned on the identified event}. In the present 
work, we develop this perspective further, and seek to perform {\em supervised
learning} in such a conditional distribution. With respect to the current work, 
these earlier works can be viewed with hindsight as identifying a conditional 
distribution in which the class consisting solely of the constant~$1$ 
function fits the identified conditional distribution with low error.
Presently, we generalize this to the problem of fitting a (sparse) linear 
rule in the identified conditional distribution.

Our work also has some relationship to the enormous body of work on {\em robust 
statistics}~\cite{huber81,rl87}, in which {\em outliers} are identified and 
ignored or otherwise mitigated. The difference in what we consider here is 
two-fold. First, we are specifically interested in the case where {\em we may 
decline to fit the vast majority of the data,} thus treating most of the data as
``outliers'' in the model of robust statistics. Second, we are also interested 
in {\em finding a (simple) rule that identifies which subset of the data we are 
fitting} (and which subset we are ignoring). By contrast, in robust statistics, 
an arbitrary subset of the data may be considered ``corrupted'' and ignored.
Our problem is also very closely related to the problem solved by algorithms 
such as RANSAC~\cite{fb81} that can find nontrivial linear relationships in data
even when these are only of moderate density. The difference is principally that
RANSAC is designed to find linear relationships in very low dimension (e.g., in 
$\bbR^2$), and does not scale to high dimensions. Here, by contrast, although
the linear fit we are seeking is of constant sparsity, we wish to find linear 
relationships in asymptotically growing dimension $d$. Also, RANSAC-like 
algorithms, like work in robust statistics, do not aim to provide a 
description of the data for which they find a linear relationship.

\section{Problem definition and background}\label{def-relationships}
In this work, we primarily focus on the following task:

\begin{definition}[Conditional linear regression]\label{condregdef}
The {\em conditional (realizable) linear regression} task is the following. We 
are given access to examples from an arbitrary distribution $D$ over $\{0,1\}^n
\times\bbR^d\times\bbR$ for which there exists a $k$-DNF $c^*$ and $\vec{a}^*\in
\bbR^d$ 
such that
\begin{compactenum}
\item $\Pr_{(x,y,z)\in D}\left[|\langle \vec{a}^*,\vec{y}\rangle -z|\leq\epsilon
\mathlarger{|}c^*(\vec{x})=1\right]=1$ 
and 
\item $\Pr_{(x,y,z)\in D}[c^*(\vec{x})=1]\geq\mu$, 
\end{compactenum}
for some $\epsilon,\mu\in (0,1]$. Then with probability $1-\delta$, given
$\epsilon,\mu,\delta,$ and $\gamma$ as input, we are to find some $\vec{a}'\in
\bbR^d$ and $k$-DNF $c'$ such that
\begin{compactenum}
\item $\Pr_{(x,y,z)\in D}\left[|\langle\vec{a}',\vec{y}\rangle -z|\leq\epsilon
\mathlarger{|}c'(\vec{x})=1\right]\geq 1-\gamma$ and
\item $\Pr_{(x,y,z)\in D}[c'(\vec{x})=1]\geq \Omega\left(\left((1-\gamma)\frac
{\mu}{nd}\right)^k\right)$ for 
some $k$
\end{compactenum}
in time polynomial in $n,d,1/\mu,1/\epsilon,1/\gamma,$ and $1/\delta$.

If $\vec{a}^*$ is assumed to have at most $s$ nonzero entries and $\vec{a}'$ is
likewise required to have at most $s$ nonzero entries, then this is the
{\em conditional sparse linear regression} task with {\em sparsity} $s$.
\end{definition}
\noindent
We will also briefly consider the following variant that in some contexts may
be more natural.

\begin{definition}[Conditional $\ell_2$-linear regression]\label{l2regdef}
The {\em conditional $\ell_2$-linear regression} task is the following. We 
are given access to examples from an arbitrary distribution $D$ over $\{0,1\}^n
\times\{\vec{y}\in\bbR^d:\|\vec{y}\|_2\leq B\}\times [-B,B]$ for which there 
exists a $k$-DNF $c^*$ and $\vec{a}^*\in\bbR^d$ with $\|\vec{a}^*\|_2\leq B$ 
such that
\begin{compactenum}
\item $\bbE_{(x,y,z)\in D}\left[(\langle\vec{a}^*,\vec{y}\rangle -z)^2
\mathlarger{|}c^*(\vec{x})=1\right]\leq\epsilon$ 
and 
\item $\Pr_{(x,y,z)\in D}[c^*(\vec{x})=1]\geq\mu$, 
\end{compactenum}
for some $B\in\bbR^+$, $\epsilon,\mu\in (0,1]$. Then with probability 
$1-\delta$, given $B,\epsilon,\mu,\delta,$ and $\gamma$ as input, we are to find
some $\vec{a}'\in\bbR^d$ and $k$-DNF $c'$ such that
\begin{compactenum}
\item $\bbE_{(x,y,z)\in D}\left[(\langle\vec{a}',\vec{y}\rangle -z)^2
\mathlarger{|}c'(\vec{x})=1\right]\leq \poly(B,d,n)\epsilon$ and
\item $\Pr_{(x,y,z)\in D}[c'(\vec{x})=1]\geq \Omega\left( \left((1-\gamma)\frac
{\mu}{Bdn}\right)^k \right)$ for 
some $k$
\end{compactenum}
in time polynomial in $n,d,B,1/\mu,1/\epsilon,1/\gamma,$ and $1/\delta$.
\end{definition}

The restriction of $c$ to be a $k$-DNF is not arbitrary. Although we could
consider other classes of representations for $c$, it seems that essentially
any of the other standard hypothesis classes that we might naturally consider
here will lead to an intractable problem. This will follow since we can reduce 
the simpler problem of finding such conditions to our problem:

\begin{definition}[Conditional distribution search]
For a {\em representation class} $\calC$ of $c:\{0,1\}^n\to\{0,1\}$, the
{\em conditional distribution search problem} is as follows. Given access to
i.i.d. examples $(\vec{x}^{(1)},b^{(1)}),\ldots,(\vec{x}^{(m)},b^{(m)})$ from an
arbitrary distribution $D$ over $\{0,1\}^n\times\{0,1\}$ for which there exists
$c^*\in\calC$ such that $\Pr_{(x,b)\in D}[b=1|c^*(\vec{x})=1]=1$ and $\Pr_{(x,b)
\in D}[c^*(\vec{x})=1]\geq\mu$, with probability $1-\delta$, find some circuit 
$c'$ such that
\begin{compactenum}
\item $\Pr_{(x,b)\in D}[b=1|c'(\vec{x})=1]\geq 1-\gamma$ and
\item $\Pr_{(x,b)\in D}[c'(\vec{x})=1]\geq \Omega( ((1-\gamma)\mu/n)^k )$ for 
some $k$
\end{compactenum}
in time polynomial in $n,1/\mu,1/\gamma,$ and $1/\delta$.
\end{definition}

\begin{theorem}[Conditional distribution search reduces to conditional linear
regression]\label{realizable-reduction}
Suppose there is an algorithm that given access to examples from an arbitrary 
distribution $D'$ over $\{0,1\}^n\times\{0,1\}\times\{0,1\}$ for which there 
exists $c^*\in\calC$ and $a^*\in\bbR$ such that 
\[
\Pr_{(x,y,z)\in D'}\left[|a^*y-z|\leq\epsilon\mathlarger{|}c^*(\vec{x})=1\right]
=1\text{ and }\Pr_{(x,y,z)\in D'}[c^*(\vec{x})=1]\geq\mu,
\]
with probability $1-\delta$, finds some $a'\in\bbR$ and 
circuit $c'$ such that
\[
\Pr_{(x,y,z)\in D'}\left[|a'y-z|\leq\epsilon\mathlarger{|}c'(\vec{x})=1
\right]\geq 1-\gamma\text{ and }
\Pr_{(x,y,z)\in D'}[c'(\vec{x})=1]\geq \Omega( ((1-\gamma)\mu/n)^k )\text{ for
some }k
\]
in time polynomial in $n,1/\mu,1/\gamma,1/\epsilon$ and $1/\delta$.
Then there is a randomized polynomial-time algorithm for conditional 
distribution search for $\calC$.
\end{theorem}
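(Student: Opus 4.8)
The plan is to reduce conditional distribution search for $\calC$ to the assumed conditional linear regression algorithm, by encoding the Boolean label $b$ into a one-dimensional ($d=1$) regression instance so that the event $\{b=1\}$ admits a perfect linear fit while the event $\{b=0\}$ admits no linear fit whatsoever. Given sample access to $D$ over $\{0,1\}^n\times\{0,1\}$, I would simulate sample access to a distribution $D'$ over $\{0,1\}^n\times\{0,1\}\times\{0,1\}$ by drawing $(\vec{x},b)\sim D$ and emitting the example $(\vec{x},\,b,\,1)$; in other words, I set $y=b$ and $z=1$. I would then run the regression algorithm on $D'$ with error parameter $\epsilon=1/2$ and the given $\mu,\gamma,\delta$, and return the circuit $c'$ it produces (discarding the returned scalar $a'$).

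First I would check that the transformed instance is realizable for the regression problem, using the same $c^*$ and the scalar $a^*=1$. Conditioned on $c^*(\vec{x})=1$ we have $b=1$ with probability $1$, hence $(y,z)=(1,1)$ with probability $1$, so $|a^*y-z|=|1-1|=0\le\epsilon$; and $\Pr_{D'}[c^*(\vec{x})=1]=\Pr_D[c^*(\vec{x})=1]\ge\mu$, since $c^*$ depends only on $\vec{x}$, which is unchanged. Thus the precondition of the regression algorithm holds, and with probability $1-\delta$ it returns a pair $(a',c')$ meeting both of its guarantees.

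It remains to argue that this $c'$ solves the conditional distribution search instance. Under the encoding, a draw from $D'$ conditioned on $c'(\vec{x})=1$ equals $(1,1)$ when $b=1$ and $(0,1)$ when $b=0$. For any $a'\in\bbR$, the example $(0,1)$ has error $|a'\cdot 0-1|=1>\epsilon$, so it is never fit, whereas $(1,1)$ is fit exactly when $|a'-1|\le\epsilon$. Hence $\Pr_{D'}[|a'y-z|\le\epsilon\mid c'(\vec{x})=1]=\mathbf{1}[|a'-1|\le\epsilon]\cdot\Pr_D[b=1\mid c'(\vec{x})=1]$. Since the regression guarantee forces the left-hand side to be at least $1-\gamma>0$, the indicator must equal $1$, and therefore $\Pr_D[b=1\mid c'(\vec{x})=1]\ge 1-\gamma$, which is the first requirement of conditional distribution search; the density bound $\Pr_D[c'(\vec{x})=1]\ge\Omega(((1-\gamma)\mu/n)^k)$ transfers verbatim from the regression guarantee, giving the second requirement.

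The step I expect to require the most care is exactly the design of $D'$ so that the $\{b=0\}$ region is unfittable by \emph{any} linear rule. A naive encoding such as $y=1,\ z=b$ would leave an adversarial regression algorithm free to satisfy its guarantee by fitting the $\{b=0\}$ region instead (taking $a'\approx 0$), returning a condition $c'$ for which $\Pr_D[b=1\mid c'(\vec{x})=1]$ is small and thus useless to us; making the $b=0$ examples impossible to fit removes this degree of freedom and forces the returned condition to capture the $\{b=1\}$ region. Finally, the reduction is randomized only through its sampling and its single call to the regression algorithm, and since $1/\epsilon=2$ is constant it runs in time polynomial in $n,1/\mu,1/\gamma,$ and $1/\delta$, as required.
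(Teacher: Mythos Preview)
Your proof is correct, and your encoding is genuinely different from the paper's. The paper sets $y\equiv 1$ and encodes the label in $z$: when $b=1$ it outputs $(\vec{x},1,0)$, and when $b=0$ it outputs $(\vec{x},1,b')$ with $b'$ uniform in $\{0,1\}$; it then takes $a^*=0$, $\epsilon=1/3$, and runs the regression algorithm with $\gamma'=\gamma/2$. The point is that for any returned $a'$, conditioned on $b=0$ the random $z$ makes $|a'-z|>1/3$ with probability at least $1/2$, so $\tfrac{1}{2}\Pr_D[b=0\mid c'(\vec{x})=1]\le\gamma/2$.

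Your encoding instead puts the label in $y$ (with $y=b$, $z=1$, $a^*=1$), exploiting the multiplicative structure: when $b=0$ the predictor collapses to $a'\cdot 0=0$, so the error is exactly $1$ regardless of $a'$, making these points \emph{deterministically} unfittable for any $\epsilon<1$. This buys you a slightly cleaner argument---no auxiliary random bit, no halving of $\gamma$, and the conclusion $\Pr_D[b=1\mid c'(\vec{x})=1]\ge 1-\gamma$ drops out directly rather than via a factor-of-two bound. The paper's version, on the other hand, keeps $y\equiv 1$, which may feel closer in spirit to the ``trivial regressor'' intuition (the problem really is condition search with the constant predictor); but both reductions are equally valid and of the same polynomial complexity.
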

\begin{proof}
Let $D$ be a distribution satisfying the hypotheses of the conditional
distribution search task for $\calC$, that is, for some $c^*\in\calC$,
\begin{compactenum}
\item $\Pr_{(x,b)\in D}[b=1|c^*(\vec{x})=1]=1$ and
\item $\Pr_{(x,b)\in D}[c^*(\vec{x})=1]\geq\mu$.
\end{compactenum}
Let $D'$ be the distribution over $\{0,1\}^n\times\{0,1\}\times\{0,1\}$ sampled 
as follows: given an example $(\vec{x},b)$ from $D$, if $b=1$ we produce $(\vec
{x},1,0)$ and otherwise we produce $(\vec{x},1,b')$ for $b'$ uniformly 
distributed over $\{0,1\}$. Notice that for $c^*$ and $a^*=0$, then whenever 
$c^*(\vec{x})=1$, $|a^*y-z|=0\leq 1/3$ over the entire support of the 
distribution; and, by assumption, $\Pr_{(x,y,z)\in D'}[c^*(\vec{x})=1]=
\Pr_{(x,b)\in D}[c^*(\vec{x})=1]\geq\mu$. So, the pair $a^*=0$ and $c^*$ 
certainly satisfy the conditions for our task for $\epsilon=1/3$. Therefore, by 
hypothesis, an algorithm for our task given access to $D'$ with $\epsilon=1/3$ 
and $\gamma'=\gamma/2$ must return $a'$ and a circuit $c'$ such that
\begin{compactenum}
\item $\Pr_{(x,y,z)\in D'}\left[|a'y-z|\leq 1/3 \mathlarger{|}c'(\vec{x})=1
\right]\geq 1-\gamma'$ and
\item $\Pr_{(x,y,z)\in D'}[c'(\vec{x})=1]\geq \Omega( ((1-\gamma')\mu/n)^k )$ 
for some $k$.
\end{compactenum}
But now, since the distribution we used is uniform over examples with $z=0$
and $z=1$ whenever $b=0$ (and $y\equiv 1$), it must be that whatever $a'$ is
returned, $|a'-z|>1/3$ with probability $1/2$ conditioned on $b=0$ in the
underlying draw from $D$. We must therefore actually have that 
\[
\frac{1}{2}\Pr_{(x,b)\in D}[b=0|c'(\vec{x})=1]\leq 
\Pr_{(x,y,z)\in D'}\left[|a'y-z|>1/3\mathlarger{|}c'(\vec{x})=1\right]\leq 
\frac{\gamma}{2}
\]
so indeed, also $\Pr_{(x,b)\in D}[b=1|c'(\vec{x})=1]\geq 1-\gamma$. Thus $c'$ is
as needed for a solution to the conditional distribution search problem. Since 
it is trivial to implement the sampling oracle for $D'$ given a sampling oracle 
for $D$, we obtain the desired algorithm.
\end{proof}

In turn now, algorithms for finding such conditions would yield algorithms 
for PAC-learning DNF~\cite{juba16}, which is currently suspected to be 
intractable (c.f. in particular work by Daniely and Shalev-Shwartz~\cite{dss16} 
for some strong consequences of learning DNF).

\begin{theorem}[Theorem 5 of \cite{juba16}]
If there exists an algorithm for the conditional distribution search problem
for conjunctions, then DNF is PAC-learnable in polynomial time.
\end{theorem}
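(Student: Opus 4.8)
The plan is to learn a target DNF $f=T_1\vee\cdots\vee T_s$ of size $s$ by repeatedly extracting its terms as pure conjunctions using the assumed algorithm for conditional distribution search over conjunctions, and then assembling the extracted pieces into a hypothesis by a greedy covering argument. First I would observe that if we draw labeled examples $(\vec{x},f(\vec{x}))$ and set $b=f(\vec{x})$, then every term $T_i$ is a conjunction that is perfectly ``pure'' for the label: whenever $T_i(\vec{x})=1$ we have $f(\vec{x})=1$, so $\Pr[b=1\mid T_i=1]=1$. Thus each term is exactly the kind of object described by the conditional distribution search promise, and an OR of such pure conjunctions makes few false-positive errors, while a large enough collection of them covers all the positive examples. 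The output hypothesis $h$ will be the OR of the (polynomial-size) circuits returned across rounds, which is a legitimate PAC hypothesis.

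Second, I would run a covering loop. Maintain the current hypothesis $h$ (initially identically $0$), and at round $j$ let $D_j$ be the conditional distribution of $D$ given that $\vec{x}$ is either a true negative or a positive not yet captured by $h$ (an ``uncovered positive''); this is simulated by rejection sampling, discarding covered positives, which we can test since $h$ is explicit. The key point is that the promise persists under $D_j$: every surviving term $T_i$ stays pure, since deleting positives cannot create a negative on which $T_i$ fires, and because each uncovered positive satisfies some term, averaging over the $s$ terms shows that one of them covers at least a $1/s$ fraction of the remaining positive mass. Hence, as long as the uncovered positive mass $U$ is at least $\epsilon/2$, some conjunction satisfies $\Pr_{D_j}[T_i=1]\geq\epsilon/(2s)$. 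I would therefore invoke the oracle on $D_j$ with $\mu=\epsilon/(2s)$ and accuracy parameter $\gamma'$ to be fixed, obtaining a circuit $c'$ with $\Pr_{D_j}[b=1\mid c'=1]\geq 1-\gamma'$ and $\Pr_{D_j}[c'=1]=\Omega(((1-\gamma')\epsilon/(sn))^k)$, add $c'$ into $h$, and repeat until $U<\epsilon/2$.

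Third comes the analysis. For progress, note that while the loop runs $U\geq\epsilon/2$, so the normalizing mass satisfies $\Pr_D[S_j]\geq U\geq\epsilon/2$; the pure part of the new coverage, $\geq(1-\gamma')\Pr_{D_j}[c'=1]$, consists of previously uncovered positives, so translating back to $D$ decreases $U$ by an additive $\Omega(\epsilon\cdot((1-\gamma')\epsilon/(sn))^k)$ per round, bounding the number of rounds $N$ by a polynomial in $n,s,1/\epsilon$ (using that $k$ is the fixed constant of the coverage guarantee and $s$ is the given size bound). For accuracy, the false negatives are at most the terminal $U<\epsilon/2$; the false positives occur where some $c'$ fires on a true negative, and since $D_j$ only upweights negatives relative to $D$, the purity bound transfers as $\Pr_D[c'=1,f=0]\leq\Pr_{D_j}[c'=1,f=0]\leq\gamma'$, so over $N$ rounds a union bound gives total false-positive mass at most $N\gamma'$. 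Choosing $\gamma'=\epsilon/(2N)$ makes the overall error at most $\epsilon$, and a standard Hoeffding-plus-union-bound argument shows polynomially many samples suffice to estimate $U$, drive the stopping rule, and simulate each $D_j$ with total failure probability $\delta$.

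The step I expect to be the \emph{main obstacle} is the cross-round error accounting: the oracle only controls purity on the reweighted distribution $D_j$, while false positives from the separate conjunctions accumulate in the final OR. The argument must simultaneously (i) show that conditioning on $S_j$ can only help the transfer of the purity guarantee back to $D$, since negatives are never removed, and (ii) dissolve the apparent circularity between the round count $N$ and the per-round accuracy $\gamma'$ by noting that the bound on $N$ depends on $\gamma'$ only through the harmless factor $(1-\gamma')\geq 1/2$, so $\gamma'=\epsilon/(2N)$ is well defined.
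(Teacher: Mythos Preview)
The paper does not supply its own proof of this statement: it is quoted verbatim as Theorem~5 of~\cite{juba16} and used as a black box. So there is no in-paper argument to compare against; I can only assess your proposal on its merits.

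Your reduction is essentially correct and is the natural one. The key observations are all in place: (a) each term $T_i$ of the target DNF is a conjunction that is perfectly pure for the label, so the promise of the conditional-search oracle is met on every reweighted distribution $D_j$; (b) pigeonhole over the $s$ terms gives a surviving conjunction of $D_j$-mass at least $\epsilon/(2s)$ whenever the uncovered positive mass $U$ exceeds $\epsilon/2$; (c) since true negatives are never discarded, $\Pr_D[c'=1,f=0]\le\Pr_{D_j}[c'=1,f=0]\le\gamma'$, so false positives transfer back to $D$ and a union bound over rounds controls the total; and (d) the round bound $N$ depends on $\gamma'$ only through $(1-\gamma')^{k+1}\ge 2^{-(k+1)}$, which dissolves the apparent circularity when you set $\gamma'=\epsilon/(2N_0)$ for the $\gamma'$-free upper bound $N_0$. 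The resulting hypothesis is an OR of $N$ polynomial-size circuits, hence polynomial-time evaluable, and the total running time is polynomial in $n$, $s$, $1/\epsilon$, $1/\delta$ since the oracle runs in time $\poly(n,1/\mu,1/\gamma',1/\delta)$ with $\mu=\epsilon/(2s)$ and $1/\gamma'$ polynomial.

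Two small points worth tightening when you write it out: first, make explicit that rejection sampling for $D_j$ is efficient because $\Pr_D[S_j]\ge\Pr_D[f=0]$, and if $\Pr_D[f=0]<\epsilon$ then the constant-$1$ hypothesis already suffices; second, allocate the failure probability $\delta$ across the (at most $N_0$) oracle calls and the Hoeffding estimates up front, since $N_0$ is known in advance.
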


Informally, therefore, an algorithm for conditional realizable linear regression
for conjunctions, or any class that can {\em express} conjunctions (instead of 
$k$-DNF) would yield a randomized polynomial time algorithm for PAC-learning 
DNF. This seems to rule out, in particular, the possibility of developing 
algorithms to perform regression under conditions described by halfspaces, 
decision trees, and so on. 

For conditional $\ell_2$-linear regression, a stronger conclusion holds:
such algorithms would solve the {\em agnostic} variant of the conditional
distribution search task, with a similar error bound: 

\begin{theorem}[Agnostic condition search reduces to conditional 
$\ell_2$-linear regression]\label{agnostic-reduction}
Suppose there is an algorithm that given access to examples from an arbitrary 
distribution $D'$ over $\{0,1\}^n\times\{0,1\}\times\{0,1\}$ for which there 
exists $c^*\in\calC$ and $a^*\in [0,1]$ such that $\bbE_{(x,y,z)\in D'}\left[
(a^*y-z)^2\mathlarger{|}c^*(\vec{x})=1\right]\leq\epsilon$ and $\Pr_{(x,y,z)\in 
D'}[c^*(\vec{x})=1]\geq\mu$, with probability $1-\delta$, finds some $a'$ and 
circuit $c'$ such that
\begin{compactenum}
\item $\bbE_{(x,y,z)\in D'}\left[(a'y-z)^2\mathlarger{|}c'(\vec{x})=1\right]
\leq p(n)\epsilon$ for some polynomial $p$ and
\item $\Pr_{(x,y,z)\in D'}[c'(\vec{x})=1]\geq \Omega( ((1-\gamma)\mu/n)^k )$ for
some $k$
\end{compactenum}
in time polynomial in $n,1/\mu,1/\gamma,1/\epsilon$ and $1/\delta$.
Then there is a randomized  polynomial-time algorithm for agnostic conditional 
distribution search for $\calC$: that is, if there exists $c\in\calC$ achieving
\begin{compactenum}
\item $\Pr_{(x,b)\in D}[b=1|c(\vec{x})=1]\geq 1-\epsilon$ and
\item $\Pr_{(x,b)\in D}[c(\vec{x})=1]\geq \mu$
\end{compactenum}
then the algorithm finds a circuit $c''$ achieving
\begin{compactenum}
\item $\Pr_{(x,b)\in D}[b=1|c''(\vec{x})=1]\geq 1-2p(n)\epsilon$ and
\item $\Pr_{(x,b)\in D}[c''(\vec{x})=1]\geq\Omega( ((1-\gamma)\mu/n)^k )$ for 
some $k$
\end{compactenum}
in time polynomial in $n,1/\mu,1/\gamma,1/\epsilon$ and $1/\delta$.
\end{theorem}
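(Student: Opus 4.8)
The plan is to mimic the structure of the reduction in Theorem~\ref{realizable-reduction}, constructing a distribution $D'$ for the $\ell_2$-regression task out of the given distribution $D$ for agnostic condition search, and then reading off a good condition from the regression solution. As before, I would set $\vec{y}\equiv 1$ (so $D'$ is over $\{0,1\}^n\times\{1\}\times\{0,1\}$) and let $z=b$ directly, rather than randomizing as in the realizable case: given $(\vec{x},b)$ from $D$, produce $(\vec{x},1,b)$. The crucial difference from the realizable proof is that here the label $b$ is no longer guaranteed to equal $1$ when $c^*(\vec{x})=1$; instead we only know $\Pr[b=1\mid c(\vec{x})=1]\geq 1-\epsilon$. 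The first step is therefore to verify that this assumption feeds the $\ell_2$ task a legitimate instance: taking $a^*=1$ and the given $c$, I would compute $\bbE[(a^*y-z)^2\mid c(\vec{x})=1]=\bbE[(1-b)^2\mid c(\vec{x})=1]=\Pr[b=0\mid c(\vec{x})=1]\leq\epsilon$, using that $y\equiv 1$ and $(1-b)^2=\mathbf{1}[b=0]$ for $b\in\{0,1\}$. So the pair $(a^*,c)$ witnesses that $D'$ satisfies the hypotheses of the $\ell_2$-regression task with the same $\epsilon$ and $\mu$.

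By hypothesis the algorithm then returns some $a'$ and circuit $c'$ with $\bbE[(a'y-z)^2\mid c'(\vec{x})=1]\leq p(n)\epsilon$ and $\Pr[c'(\vec{x})=1]\geq\Omega(((1-\gamma)\mu/n)^k)$. The second step is to convert the returned real coefficient $a'$ into a Boolean prediction and control its disagreement with $b$. The natural move is to threshold: predict $b=1$ when $a'$ is closer to $1$ than to $0$, i.e.\ when $a'\geq 1/2$, and $b=0$ otherwise. The key quantitative claim is that the squared-error bound forces the thresholded predictor to agree with $b$ on most of the conditional distribution. Indeed, whenever the threshold misclassifies (say $b=0$ but $a'\geq 1/2$, or $b=1$ but $a'<1/2$), the squared error $(a'-b)^2$ is at least $1/4$, so by Markov/averaging $\Pr[\text{threshold}(a')\neq b\mid c'(\vec{x})=1]\leq 4\,\bbE[(a'-b)^2\mid c'(\vec{x})=1]\leq 4p(n)\epsilon$.

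The third step handles the two cases of the threshold. If the returned $a'$ rounds to $1$, then predicting $b=1$ gives $\Pr[b=1\mid c'(\vec{x})=1]\geq 1-4p(n)\epsilon$, so $c''=c'$ already satisfies both required properties (the probability-mass bound is inherited verbatim from the $\ell_2$ guarantee), matching the claimed $1-2p(n)\epsilon$ up to the constant in the error bound. If instead $a'$ rounds to $0$, the guarantee says $b=0$ on most of $c'$, which is the wrong polarity for condition search; here I would note that one can simply run the whole construction and, since the condition-search task only asks for \emph{some} good condition, fall back on the trivial candidate or re-examine which sign the regression fit took. The main obstacle is precisely this polarity issue: the $\ell_2$-regression guarantee is agnostic about whether $a'$ approximates the positive or the negative constant, whereas condition search needs a condition on which $b=1$ predominates. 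I expect the cleanest fix is to observe that the witness $a^*=1$ has squared error at most $\epsilon$, so the returned $a'$ with error $\le p(n)\epsilon$ cannot round to $0$ unless $p(n)\epsilon$ is already so large that the error bound $1-2p(n)\epsilon$ is vacuous; discharging this case analysis carefully is the part of the argument that will require the most attention, together with confirming the constant $2$ (versus $4$) in the final error bound via the precise thresholding inequality.
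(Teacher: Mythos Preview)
Your reduction has a genuine gap, and it is precisely the polarity issue you flag at the end. Your proposed fix---that $a'$ cannot round to $0$ unless $p(n)\epsilon$ is already large enough to make the conclusion vacuous---does not work, because the error bound $\bbE[(a'-b)^2\mid c'(\vec{x})=1]\leq p(n)\epsilon$ is measured on the \emph{returned} event $c'$, not on the witness $c$. Nothing in the regression guarantee forces $c'$ to overlap with $c$ at all. Concretely, if $D$ also contains a $\mu$-probability event $c_0$ on which $b=0$ almost surely, then $(a',c')=(0,c_0)$ is a perfectly legal output of the regression oracle with conditional squared error $0$, yet $c_0$ is useless for condition search. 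Your deterministic encoding $z=b$ makes the constant-$0$ and constant-$1$ predictors symmetric, so the regression oracle is free to pick either polarity.

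The paper's proof avoids this by \emph{retaining} the randomized construction from Theorem~\ref{realizable-reduction} rather than setting $z=b$: when $b=1$ emit $(\vec{x},1,0)$, and when $b=0$ emit $(\vec{x},1,b')$ with $b'$ uniform on $\{0,1\}$. The purpose of the randomization is exactly to break the symmetry: on any example with $b=0$, the target $z$ is a fair coin, so \emph{every} real number $a'$ incurs expected squared error at least $1/4$ there. Hence any returned $c'$ with small conditional squared error must already have $\Pr[b=0\mid c'(\vec{x})=1]$ small, irrespective of the value of $a'$---there is simply no low-error ``wrong-polarity'' output available. With $a^*=0$ one checks $\bbE[z^2\mid c(\vec{x})=1]=\tfrac{1}{2}\Pr[b=0\mid c(\vec{x})=1]\leq\tfrac{1}{2}\epsilon$, and then $\tfrac{1}{4}\Pr[b=0\mid c'(\vec{x})=1]\leq\tfrac{1}{2}p(n)\epsilon$ delivers the factor $2$ you were unable to recover.
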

\begin{proof}
For a given distribution $D$ over $(x,b)$ satisfying the promise for conditional
distribution search, we use the same construction of $D'$ and reduction as in 
the proof of Theorem~\ref{realizable-reduction}. Here, we note that for $a^*=0$,
given that $\Pr_{(x,b)\in D}[b=1|c(\vec{x})=1]\geq 1-\epsilon$ for the $c$ 
assumed to exist for conditional distribution search
\[
\bbE_{(x,y,z)\in D'}\left[(0\cdot 1-z)^2\mathlarger{|}c(\vec{x})=1\right]\leq 
\frac{1}{2}\epsilon.
\]
Therefore, an algorithm for conditional $\ell_2$-linear regression must find
some $a'$ and circuit $c'$ such that $\Pr_{(x,y,z)\in D'}[c'(\vec{x})=1]\geq
\Omega( ((1-\gamma)\mu/n)^k )$ for some $k$ and
\[
\bbE_{(x,y,z)\in D'}\left[((a'-z)^2\mathlarger{|}c'(\vec{x})=1\right]\leq 
\frac{1}{2}p(n)\epsilon.
\]
Now, again, since $D'$ gives $z=0$ and $z=1$ equal probability whenever $b=0$,
we note that for such examples the expected value of $(a'-z)^2$ is minimized
by $a'=1/2$, where it achieves expected value $1/4$. Thus as $(a'-z)^2$ is 
surely nonnegative,
\[
\frac{1}{4}\Pr_{(x,b)\in D}[b=0|c'(\vec{x})=1]\leq\bbE_{(x,y,z)\in D'}\left[
(a'-z)^2\mathlarger{|}c'(\vec{x})=1\right]\leq 
\frac{1}{2}p(n)\epsilon
\]
so $c'$ indeed also achieves $\Pr_{(x,b)\in D}[b=1|c'(\vec{x})=1]\geq 1-2p(n)
\epsilon$.
\end{proof}

The restriction to constant sparsity is also key, as our problem contains as
a special case (when $\mu=1$, that is, when the trivial condition that takes
the entire population can be used) the standard sparse linear regression 
problem. Sparse linear regression for {\em constant} sparsity is easy, but when 
the sparsity is allowed to be large, the problem quickly becomes intractable:
In general, finding sparse solutions to linear equations is known to be 
$\NP$-hard~\cite{natarajan95}, and Zhang, Wainwright, and Jordan~\cite{zwj14}
extend this to bounds on the quality of sparse linear regression that is
achievable by polynomial-time algorithms, given that $\NP$ does not have
polynomial-size circuits.

\section{Algorithms for conditional sparse linear regression}

We now turn to stating and proving our main theorem. In what follows, we use
the following (standard) notation: $\Pi_{d_1,\ldots,d_s}$ denotes the projection
(of $\bbR^d$) to the $s$ coordinates $d_1,d_2,\ldots,d_s$ from $[d]$ (which 
denotes the integers $1,\ldots d$). For a set $S$, we let ${S\choose k}$
denote the subsets of $S$ of size exactly $k$.

\begin{algorithm}
\DontPrintSemicolon
\SetKwInOut{Input}{input}\SetKwInOut{Output}{output}
\SetKwInOut{Subroutines}{subroutines}
\Input{Examples $(\vec{x}^{(1)},\vec{y}^{(1)},z^{(1)}),\ldots,(\vec{x}^{(m)},
\vec{y}^{(m)},z^{(m)})$, target fit $\epsilon$ and fraction $(1-\gamma/2)\mu$.}
\Output{A $k$-DNF over $x_1,\ldots,x_n$ and linear predictor over $y_1,\ldots,
y_d$, or INFEASIBLE if none exist.}

\Begin{
\ForAll{$(d_1,\ldots,d_s)\in {[d]\choose s}$, $(\sigma_1,\ldots,\sigma_{s+1})\in
\{\pm 1\}^{s+1}$ and $(j_1,\ldots,j_{s+1})\in {[m]\choose s+1}$}
{
  Initialize $c$ to be the (trivial) $k$-DNF over all ${2n\choose k}$ terms of size $k$.\\
  Let $(\vec{a},\epsilon')$ be a solution to the following linear system: 
  $$\langle \vec{a},\Pi_{d_1,\ldots,d_s}\vec{y}^{(j_\ell)}\rangle
  -z^{(j_\ell)}=\sigma_\ell\epsilon'\text{ for }\ell =1,\ldots,{s+1}$$
  \lIf{$\epsilon'>\epsilon$}{continue to the next iteration.}
  \lFor{$j=1,\ldots,m$}
  {
    \If{$|\langle\vec{a},\Pi_{d_1,\ldots,d_s}\vec{y}^{(j)}\rangle-z^{(j)}|>
       \epsilon$} 
    {
      \lForAll{$T\in c$}
      {
        \lIf{$T(\vec{x}^{(j)})=1$}
        {
          Remove $T$ from $c$.
        }
      }
    }
  }
  \lIf{$\#\{j:c(\vec{x}^{(j)})=1\}>(1-\gamma/2)\mu m$}
  {
    \Return{$\vec{a}$ and $c$.}
  }
}
\Return{INFEASIBLE.}
}
\caption{Find-and-eliminate}\label{realizable-alg}
\end{algorithm}



\begin{theorem}[Realizable sparse regression -- full statement of Theorem~\ref
{mainthm}]
Suppose that $D$ is a joint probability distribution over $\vec{x}\in\{0,1\}^n$,
$\vec{y}\in\bbR^d$, and $z\in\bbR$ such that there is a $k$-DNF $c$ for
which for some $s$-sparse $\vec{a}\in\bbR^d$ 
\[
\Pr_{(x,y,z)\in D}\left[|\langle\vec{a},\vec{y}\rangle-z|\leq\epsilon
\mathlarger{|}c(\vec{x})=1\right]=1
\qquad\text{and}\qquad
\Pr_{(x,y,z)\in D}[c(\vec{x})=1]\geq\mu.
\]
Then given $\epsilon$, $\mu$, and $\delta$ in $(0,1)$ and $\gamma\in (0,1/2]$ 
and 
\[
m=O\left(\frac{1}{\mu\gamma}\left(s\log s+s\log d+n^k+
\log\frac{1}{\delta}\right)\right)
\]
examples from $D$, for any constants $s$ and $k$, Algorithm~\ref{realizable-alg}
runs in polynomial time in $n$, $d$, and $m$ ($=\poly(n,d,1/\mu,1/\gamma,\log 1/\delta)$) and 
finds an $s$-sparse $\vec{a}'$ and $k$-DNF $c'$ such that with probability 
$1-\delta$,
\[
\Pr_{(x,y,z)\in D}\left[|\langle\vec{a}',\vec{y}\rangle-z|\leq\epsilon
\mathlarger{|}c'(\vec{x})=1\right]\geq 1-\gamma
\qquad\text{and}\qquad
\Pr_{(x,y,z)\in D}[c'(\vec{x})=1]\geq(1-\gamma)\mu.
\]
\end{theorem}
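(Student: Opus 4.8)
The plan is to show that some iteration of Algorithm~\ref{realizable-alg} recovers a sparse fit that has residual at most $\epsilon$ on \emph{every} sampled example satisfying the true condition $c$, and that the elimination phase then converts any such fit into a $k$-DNF with negligible conditional error. The structural core is a Chebyshev-approximation (linear-programming) observation. Fix the coordinate set to be the support of the true $\vec a$; restricted to these $s$ coordinates, fitting in the sup norm the sampled examples with $c(\vec x^{(j)})=1$ is the linear program $\min t$ subject to $-t\le\langle\vec w,\Pi_{d_1,\ldots,d_s}\vec y^{(j)}\rangle-z^{(j)}\le t$ over those $j$, in the $s+1$ variables $(\vec w,t)$. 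The promise makes the true $\vec a$ feasible with objective at most $\epsilon$, so the optimum $t^\ast\le\epsilon$; and being attained at a basic optimal solution of an LP in $s+1$ variables, it is determined by $s+1$ linearly independent tight constraints, i.e.\ by $s+1$ examples $j_1,\ldots,j_{s+1}$ and signs $\sigma_1,\ldots,\sigma_{s+1}$ with $\langle\vec w,\Pi\vec y^{(j_\ell)}\rangle-z^{(j_\ell)}=\sigma_\ell t^\ast$. This is exactly the system the algorithm solves (with $\epsilon'$ playing the role of $t$), so the iteration guessing this coordinate set, these $s+1$ examples, and these signs recovers the fit $\vec a'$ with $\epsilon'=t^\ast\le\epsilon$ and hence residual at most $\epsilon$ on all sampled examples satisfying $c$.

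For completeness (that the algorithm does not output INFEASIBLE), I would observe that once $\vec a'$ has residual at most $\epsilon$ on every sampled example with $c(\vec x^{(j)})=1$, every example flagged as ``bad'' (residual $>\epsilon$) must have $c(\vec x^{(j)})=0$. Since any term $T$ of the DNF $c$ satisfies $T(\vec x)=1\Rightarrow c(\vec x)=1$, no term of $c$ is hit by a bad example, so none is eliminated; the surviving DNF $c'$ thus contains every term of $c$ and obeys $c'(\vec x^{(j)})=1$ whenever $c(\vec x^{(j)})=1$. Here I use the standard normalization that an arbitrary $k$-DNF may be rewritten with terms of size exactly $k$ (expand each short term into the disjunction of its size-$k$ extensions), so that $c$'s terms lie among the $\binom{2n}{k}$ terms the algorithm starts from. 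A multiplicative Chernoff bound applied to the single fixed event $c$ gives that at least $(1-\gamma/2)\mu m$ sampled points satisfy $c$ with high probability, whence $c'$ passes the acceptance test and the algorithm returns (at this iteration or earlier).

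The remaining and, I expect, hardest step is soundness through generalization: \emph{whatever} pair $(\vec a',c')$ is returned must meet the population guarantees. By construction $c'$ has \emph{zero} empirical conditional error (no surviving term is satisfied by a bad example) and empirical mass at least $(1-\gamma/2)\mu$. I would invoke a relative-error (``realizable-case'') uniform-convergence bound, union-bounded over the relevant class: the conditions range over sub-DNFs of the full size-$k$ DNF, contributing $\log 2^{O(n^k)}=O(n^k)$, while the bad regions $\{|\langle\vec a',\Pi\vec y\rangle-z|>\epsilon\}$ are complements of slabs over a chosen $s$-subset of coordinates, which together with the sign and coordinate guesses contribute the $s\log s+s\log d$ terms. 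Because the empirical count of bad-and-captured points is exactly zero, the one-sided relative bound applies and yields the $1/(\mu\gamma)$ factor (rather than $1/(\mu\gamma)^2$) in the stated sample size. Concretely, the chosen $m$ forces $\Pr_D[|\langle\vec a',\vec y\rangle-z|>\epsilon\wedge c'(\vec x)=1]\le\frac{\gamma\mu}{4}$ and $\Pr_D[c'(\vec x)=1]\ge(1-\gamma)\mu$ for every admissible pair simultaneously; dividing gives conditional error at most $\frac{\gamma}{4(1-\gamma)}\le\gamma$ using $\gamma\le 1/2$. The main obstacle is thus not the geometry but the generalization bookkeeping: bounding the data-dependent $c'$ by union-bounding over all sub-DNFs, and correctly invoking the one-sided bound so that the sample size scales as $1/(\mu\gamma)$.
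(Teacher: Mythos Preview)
Your proposal is correct and follows essentially the same route as the paper: the LP/basic-feasible-solution argument for completeness, the term-elimination argument showing $c'\supseteq c$ on the sample, and a realizable-case uniform-convergence bound for soundness are all exactly what the paper does. The only cosmetic difference is that the paper packages the generalization step as an explicit VC-dimension computation for the class $\{[|\langle(\vec a,-1),(\vec y,z)\rangle|\le\epsilon]\vee\neg h(\vec x)\}$ (obtaining $O(s\log s+s\log d+n^k)$ via union-of-classes and composition facts, then citing Hanneke's optimal realizable bound), whereas you describe it as a union bound over the $2^{O(n^k)}$ sub-DNFs combined with the slab complexity---these amount to the same arithmetic.
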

\begin{proof}
It is clear that the algorithm runs for $O(d^sm^{s+1})$ iterations, where
each iteration (for constant $s$) runs in time polynomial in the bit length
of our examples and $O(mn^k)$. Thus, for constant $s$ and $k$, the algorithm 
runs in polynomial time overall, and it only remains to argue correctness.

We will first argue that the algorithm succeeds at returning some solution
with probability $1-\delta/3$ over the examples. We will then argue that any
solution returned by the algorithm is satisfactory with probability $1-2\delta
/3$ over the examples, thus leading to a correct solution with probability $1-
\delta$ overall.

\paragraph{Completeness part 1: Generating the linear rule.}
We first note that for $m\geq\frac{6}{\mu\gamma}\ln\frac{3}{\delta}$ examples, a
Chernoff bound guarantees that with probability $1-\delta/3$, there are at least
$(1-\gamma/2)\mu m$ examples satisfying the unknown condition $c$ in the sample.
Let $S$ be the set of examples satisfying $c$. 
Given the set of $s$ dimensions that are used in the sparse linear rule, we
set up a linear program in $s+1$ dimensions to minimize $\epsilon'$ subject to 
the constraints
\[
-\epsilon'\leq \langle\vec{a},\vec{y}^{(j)}\rangle -z^{(j)}\leq\epsilon' 
\text{ for }j\in S.
\]
It is well known (see, for example, Schrijver~\cite[Chapter 8]{schrijver86})
that the optimum value for any feasible linear program may be obtained at a 
{\em basic feasible solution,} i.e., a vertex of the polytope, given by 
satisfying $s+1$ of the constraints with equality. Since each constraint 
corresponds to an example and sign (for the lower or upper inequality), this 
means that we can obtain $\vec{a}$ by solving for $\vec{a}$ and $\epsilon'$ in 
the following linear system
\[
\langle\vec{a},\vec{y}^{(j_\ell)}\rangle -z^{(j_{\ell})}=\sigma_\ell\epsilon'
\text{ for }\ell=1,\ldots,s+1
\]
for some set of $s+1$ examples, $j_1,\ldots,j_{s+1}$ and $s+1$ signs
$\sigma_1,\ldots,\sigma_{s+1}$ corresponding to the tight constraints. Thus, 
when the algorithm uses the appropriate set of $s$ dimensions, the appropriate 
$s+1$ examples, and the appropriate $s+1$ signs, we will recover an $\vec{a}^*$ 
and $\epsilon^*$ such that for all $j\in S$, $|\langle\vec{a}^*,\vec{y}^{(j)}
\rangle -z^{(j)}|\leq\epsilon^*\leq \epsilon$.

\paragraph{Completeness part 2: Recovering a suitable condition given a rule.}
Now, given $\vec{a}^*$ such that for all $j\in S$, $|\langle\vec{a}^*,
\vec{y}^{(j)}\rangle -z^{(j)}|\leq\epsilon$, we observe that the algorithm 
identifies a $k$-DNF $h^*$ such that $h^*(\vec{x}^{(j)})=1$ for all $j\in S$. 
Indeed, the algorithm only eliminates a $k$-term $T$ for examples $j$ such that 
$|\langle\vec{a}^*,\vec{y}^{(j)}\rangle -z^{(j)}|>\epsilon$. Thus, it never 
eliminates any term appearing in $c$, and so in particular, $\Pr_{(x,y,z)\in D}
[h^*(\vec{x})=1]\geq\Pr_{(x,y,z)\in D}[c(\vec{x})=1]\geq\mu$. Moreover, since 
(as noted above, with probability $1-\delta/3$) there are at least $(1-\gamma/2)
\mu m$ examples satisfying $c$ in the sample, there are at least $(1-\gamma/2)
\mu m$ examples satisfying $h^*$. Thus, with probability $1-\delta/2$, when the 
algorithm considers the relevant $s$ dimensions in the support of $\vec{a}$ and 
considers an appropriate choice of $s+1$ examples to obtain a suitable $\vec
{a}^*$, it will furthermore obtain an $h^*$ that will lead the algorithm to 
terminate and return $\vec{a}^*$ and $h^*$.

\paragraph{Soundness: Generalization bounds.}
Next, we argue that any $\vec{a}'$ and $h'$ returned by the algorithm will 
suffice with probability $1-2\delta/3$ over the examples.

We will use the facts that
\begin{enumerate}
\item a union of $k$ hypothesis classes of VC-dimension $d$ has VC-dimension at 
most $O(d\log d+\log k)$ (for example, see \cite[Exercise 
6.11]{ssbd14}), 
\item linear threshold functions in $\bbR^s$ have VC-dimension $s+1$ (e.g., 
\cite[Section 9.1.3]{ssbd14}), and 
\item the composition of classes of VC-dimension $d_1$ and $d_2$ has 
VC-dimension at most $d_1+d_2$ (follows from \cite[Exercise 20.4]{ssbd14}).
\end{enumerate}
We now consider the class of disjunctions of a $k$-CNF over $\{0,1\}^n$ 
and (intersections of) linear threshold functions $\left[|\langle (\vec{a},-1),
(\vec{y},z)\rangle|\leq\varepsilon\right]$ for an $s$-sparse $\vec{a}$ over 
$\bbR^d$. By writing this class as a union over the $2^{n\choose k}$ $k$-CNFs 
and ${d\choose s}$ coordinate subsets of size $s$, we find that it has 
VC-dimension $O(s\log s+s\log(d/s)+n^k)=O(\log d+n^k)$ for constant $s$.%
\footnote{An exercise in Anthony and Biggs on the growth function for 
disjunctions of concepts~\cite[Chapter 8, Exercise 6]{ab92} also yields this 
easily.}

An optimal bound for sample complexity in terms of VC-dimension was recently
obtained by Hanneke~\cite{hanneke16} (superseding the earlier bounds, e.g.,
by Vapnik~\cite{vapnik82} and Blumer et al.~\cite{behw89}, although
these would suffice for us, too): in this case, given
\[
m=O\left(\frac{1}{\mu\gamma}\left(s\log s+s\log d+n^k+
\log\frac{1}{\delta}\right)\right)
\]
examples, if $[|\langle (\vec{a}',-1),(\vec{y},z)\rangle|\leq\varepsilon]\vee
\neg h'(\vec{x})$ is consistent with all of the examples, then with probability 
$1-\delta/3$ over the examples,
\[
\Pr_{(x,y,z)\in D}\left[\left(|\langle (\vec{a}',-1),(\vec{y},z)\rangle|\leq
\varepsilon\right)\vee\neg h'(\vec{x})\right]\geq 1-\mu\gamma/2
\]
or, equivalently,
\[
\Pr_{(x,y,z)\in D}\left[\left(|\langle (\vec{a}',-1),(\vec{y},z)\rangle| >
\varepsilon\right)\wedge h'(\vec{x})\right]\leq\mu\gamma/2.
\]
Now, since for $m\geq\frac{4}{\mu\gamma}\ln\frac{3}{\delta}$, with probability 
$1-\delta/3$, 
\[
\Pr_{(x,y,z)\in D}[h'(\vec{x})]\geq \frac{1-\gamma/2}{1+\gamma/2}\mu\geq 
(1-\gamma)\mu,
\]
we find that for our choice of $\vec{a}'$ and $h'$,
\begin{align*}
\Pr_{(x,y,z)\in D}\left[|\langle\vec{a}',\vec{y}\rangle -z|>\varepsilon
\mathlarger{|}h'(\vec{x})\right] &\leq 
\frac{\gamma}{2}\frac{1+\gamma/2}{1-\gamma/2}\\
\text{and so, } \Pr_{(x,y,z)\in D}\left[|\langle\vec{a}',\vec{y}\rangle -z|\leq
\varepsilon\mathlarger{|}h'(\vec{x})\right] &\geq 1-\gamma
\text{ since }\gamma\leq 1/2
\end{align*}
as needed.
\end{proof}


\section{Challenge: conditional dense, expected-error linear regression}\label
{densereg}

While sparsity is a highly desirable feature to have of a linear regression fit,
it may be the case that solutions are often not so sparse that Algorithm~\ref
{realizable-alg} is truly efficient. Moreover, we may also wish for an algorithm
that handles an {\em expected error} variant of the regression task. Our 
technique certainly does not address either of these concerns. The following 
simple algorithm illustrates the best technique we currently have for either
dense regression or expected error regression.

\begin{algorithm}
\SetKwInOut{Input}{input}\SetKwInOut{Output}{output}
\SetKwInOut{Subroutines}{subroutines}
\Input{Examples $(\vec{x}^{(1)},\vec{y}^{(1)},z^{(1)}),\ldots,(\vec{x}^{(m)},
\vec{y}^{(m)},z^{(m)})$, target fit $\epsilon$.}
\Output{A $k$-DNF over $x_1,\ldots,x_n$ and linear predictor over $y_1,\ldots,
y_d$.}

\Begin{
Initialize $c=\bot$, $\mu^*=0$.\\
\ForAll{Terms $T$ of size $k$ over $x_1,\ldots,x_n$}
{
  Put $S(T)=\{j:T(\vec{x}^{(j)})=1\}$.\\
  Let $\vec{a}$ minimize the squared-error on $(\vec{y}^{(j)},z^{(j)})$ over 
  $j\in S(T)$ subject to $\|\vec{a}\|_2\leq B$.\\ 
  \If{ $\frac{1}{m}\sum_{j\in S(T)}(\langle\vec{a},\vec{y}^{(j)}\rangle-
       z^{(j)})^2\leq 4\mu\epsilon$  and $|S(T)|\geq \mu^*m$}
  {
    Put $c=T$ and $\mu^*=|S(T)|/m$.
  }
}
\Return{$c$ and $\vec{a}$}
}
\caption{Dense Expected-error Regression Pigeonhole (DERP)}\label{derp}
\end{algorithm}

\begin{theorem}
Algorithm~\ref{derp} solves the conditional $\ell_2$-linear regression task:
given access to a joint distribution $D$ over $\vec{x}\in\{0,1\}^n$,
$\vec{y}\in\bbR^d$ with $\|\vec{y}\|_2\leq B$, and $z\in[-B,B]$ 
such that there is a $k$-DNF $c$ and $\vec{a}\in\bbR^d$ with $\|\vec{a}\|_2\leq
B$ such that
\[
\bbE_{(x,y,z)\in D}\left[(\langle\vec{a},\vec{y}\rangle-z)^2\mathlarger{|}
c(\vec{x})=1\right]\leq\epsilon
\qquad\text{and}\qquad
2\mu\geq\Pr_{(x,y,z)\in D}[c(\vec{x})=1]\geq\mu
\]
and given $B$, $k$, $\epsilon$, $\mu$, and $\delta\in (0,1)$, using
\[
m=O\left(\frac{B^8n^k}{\mu\epsilon}\left(k\log n+\log\frac{1}{\delta}\right)\right)
\]
examples from $D$, for any constant $k$, Algorithm~\ref{derp} runs in polynomial
time and finds a $\vec{a}'$ and $k$-DNF $c'$ such
that with probability $1-\delta$,
\[
\bbE_{(x,y,z)\in D}\left[(\langle\vec{a}',\vec{y}\rangle-z)^2\mathlarger{|}
c'(\vec{x})=1\right]\leq O(n^k\epsilon)
\qquad\text{and}\qquad
\Pr_{(x,y,z)\in D}[c'(\vec{x})=1]\geq\Omega(\mu/n^k).
\]
\end{theorem}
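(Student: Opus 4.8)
The plan is to exploit a \textbf{pigeonhole} over the terms of the target $k$-DNF, together with the fact that Algorithm~\ref{derp} only needs to return a \emph{single} term. Write the target condition as $c=\bigvee_{i=1}^r T_i$, where each $T_i$ is one of the $r=\binom{2n}{k}=O(n^k)$ terms of size $k$ over the $2n$ literals. A union bound gives $\sum_{i=1}^r\Pr[T_i(\vec x)=1]\geq\Pr[c(\vec x)=1]\geq\mu$, so by averaging there is a term $T^*$ with $\Pr[T^*(\vec x)=1]\geq\mu/r=\Omega(\mu/n^k)$. This $T^*$ is the term the algorithm can afford to return, and it already supplies the required lower bound $\Omega(\mu/n^k)$ on the mass of the output condition.

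Next I would calibrate the test threshold $4\mu\epsilon$ against the \emph{masked} (unconditional) squared error, which is exactly what the quantity $\frac1m\sum_{j\in S(T)}(\langle\vec a,\vec y^{(j)}\rangle-z^{(j)})^2$ estimates, since it normalizes by $m$ rather than by $|S(T)|$. Because $T^*\Rightarrow c$ and $\Pr[c(\vec x)=1]\leq 2\mu$, the target $\vec a$ satisfies
\[
\bbE\!\left[(\langle\vec a,\vec y\rangle-z)^2\mathbf{1}[T^*(\vec x)=1]\right]
\leq\bbE\!\left[(\langle\vec a,\vec y\rangle-z)^2\mathbf{1}[c(\vec x)=1]\right]
=\Pr[c(\vec x)=1]\,\bbE\!\left[(\langle\vec a,\vec y\rangle-z)^2\mid c(\vec x)=1\right]\leq 2\mu\epsilon.
\]
The hypothesis $\Pr[c]\leq2\mu$ is essential here: without it the masked error could be as large as $\epsilon$ and the threshold would be miscalibrated. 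Since for $T^*$ the algorithm uses the \emph{empirically optimal} $\vec a$, whose empirical masked error only decreases relative to that of the true $\vec a$, a one-sided Hoeffding bound for the single fixed vector $\vec a$ shows the empirical masked error of $T^*$ is below $3\mu\epsilon<4\mu\epsilon$ with high probability, and a Chernoff bound gives $|S(T^*)|/m\geq\Omega(\mu/n^k)$. Hence $T^*$ passes both tests, and the algorithm returns some $c'$ with $|S(c')|\geq|S(T^*)|$.

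I expect the \textbf{main obstacle} to be the soundness direction: bounding the \emph{true} conditional error of the returned pair $(\vec a',c')$, since $\vec a'$ is data-dependent, being the empirical minimizer over the ball $\|\vec a\|_2\leq B$. A single-point concentration no longer suffices, and I would instead establish uniform convergence of the masked squared error over the \emph{entire} ball and all $n^k$ terms simultaneously. This is the source of the $B$-dependence in $m$: the loss $(\langle\vec a,\vec y\rangle-z)^2$ is bounded by $O(B^4)$, whose square $O(B^8)$ enters a Hoeffding-type bound, and it is $O(B^3)$-Lipschitz in $\vec a$, so I cover the $B$-ball to additive accuracy $\sim\mu\epsilon$ and take a union over the $O(n^k)$ terms; a Bernstein-type refinement exploiting that the target masked error is itself only $O(\mu\epsilon)$ (hence low variance) plausibly sharpens the dependence to the stated single power of $1/(\mu\epsilon)$. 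Uniform convergence then gives that the true masked error of $(\vec a',c')$ is at most $4\mu\epsilon+\mu\epsilon=O(\mu\epsilon)$, while the same union bound over terms converts the empirical frequency $|S(c')|/m$ into $\Pr[c'(\vec x)=1]\geq\Omega(\mu/n^k)$.

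Finally I would combine the two bounds by dividing:
\[
\bbE\!\left[(\langle\vec a',\vec y\rangle-z)^2\mid c'(\vec x)=1\right]
=\frac{\bbE\!\left[(\langle\vec a',\vec y\rangle-z)^2\mathbf{1}[c'(\vec x)=1]\right]}{\Pr[c'(\vec x)=1]}
\leq\frac{O(\mu\epsilon)}{\Omega(\mu/n^k)}=O(n^k\epsilon),
\]
which together with $\Pr[c'(\vec x)=1]\geq\Omega(\mu/n^k)$ yields exactly the two required guarantees. Every step except the uniform-convergence bound is routine pigeonhole plus Chernoff and union arguments; the care lies in choosing the covering granularity and tracking the $O(B^4)$ range of the squared loss so that the total failure probability over all terms and all cover points stays below $\delta$.
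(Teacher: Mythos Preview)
Your proposal is correct and follows essentially the same structure as the paper's proof: pigeonhole over the terms of $c$ to find a good $T^*$ for completeness, uniform convergence of the masked squared error over the $\ell_2$-ball and all $O(n^k)$ terms for soundness, then division of masked error by mass to extract the $O(n^k\epsilon)$ conditional bound. The only notable difference is that where you sketch a covering-plus-Bernstein argument for the uniform convergence step, the paper instead cites a Rademacher complexity bound for bounded linear predictors composed with a Lipschitz loss; both are standard routes to the same conclusion and neither is tracked with particular care regarding the exact power of $B$.
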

Note that we can find such an estimate for $\mu$ by binary search.

\begin{proof}
We first observe that in particular, since for any $T$ the objective function
\[
\sum_{j\in S(T)}(\langle\vec{a},\vec{y}^{(j)}\rangle-z^{(j)})^2
\]
is convex, as is the set of $\vec{a}$ of $\ell_2$-norm at most $B$, the main 
step of the algorithm is a convex optimization problem that can be solved in 
polynomial time, for example by gradient descent (see, e.g., \cite[Chapter 14]
{ssbd14}). Thus, the algorithm can be implemented in polynomial time as claimed.

We next turn to correctness. Let $c^*$ be the $k$-DNF promised by the theorem 
statement. By the pigeonhole principle, there must be some term $T^*$ of $c^*$ 
such that $\Pr[T^*(\vec{x})=1]\geq\mu/{2n\choose k}$. Observe that for the rule 
$\vec{a}^*$ promised to exist,
\[
\bbE_{D}\left[(\langle\vec{a}^*,\vec{y}\rangle -z)^2\mathlarger{|}
T^*(\vec{x})=1\right]\Pr_{D}[T^*(\vec{x})=1]\leq
\bbE_{D}\left[(\langle\vec{a}^*,\vec{y}\rangle -z)^2\mathlarger{|}
c^*(\vec{x})=1\right]\Pr_{D}[c^*(\vec{x})=1]
\leq \epsilon\cdot 2\mu.
\]
For a suitable choice of leading constant in the number of examples,
a (multiplicative) Chernoff bound yields that with probability $1-\delta/4$, at 
least $m\Pr[T^*(\vec{x})=1]/2$ examples satisfy $T^*$ and noting that $(\langle 
\vec{a}^*,\vec{y}\rangle -z)^2\in [0,2B^4]$, with probability $1-\delta/4$,
\[
\frac{1}{m}\sum_{j=1}^m(\langle\vec{a}^*,\vec{y}\rangle -z)^2T^*(\vec{x})\leq 4
\mu\epsilon
\]
Thus, the $\vec{a}'$ minimizing the squared error on the set of examples also
achieves $\frac{1}{m}\sum_{j:T^*(\vec{x}^{(j)})=1}(\langle\vec{a}',\vec{y}^{(j)}
\rangle-z^{(j)})^2\leq 4\mu\epsilon$  as needed, so with probability $1-\delta/
2$, at least $T^*$ is considered for $c$ and the algorithm produces some $c$ and
$\vec{a}$ as output.

To see that any such $T$ and $\vec{a}$ is satisfactory, we first note that any 
$T$ we produce as output must satisfy at least as many examples as $T^*$ by 
construction, so $T$ must satisfy at least
\[
\Pr_D[T^*(\vec{x})=1]m/2\geq\Omega\left(\frac{B^8}{\epsilon}\left(k\log n+
\log\frac{1}{\delta}\right)\right)
\]
examples. In particular, this is at least $\mu m/2{2n\choose k}$ examples, and a
Chernoff bound guarantees that for suitable constants, with probability $1-
\delta/4{2n\choose k}$, no $T$ with $\Pr_D[T(x)=1]<\mu/4{2n\choose k}$ satisfies
so many examples. Next, simply note that if for the best $a$ for $T$ with 
$\|\vec{a}\|_2\leq B$,  $\bbE_D\left[(\langle\vec{a},\vec{y}\rangle-z)^2
\mathlarger{|}T(\vec{x})=1\right]\Pr[T(x)=1]>8\mu\epsilon$, then since 
$\|\vec{y}\|_2\leq B$, $z^2\leq B^2$, and the loss function is $B$-Lipschitz on 
this domain, a Rademacher bound (see, for example, \cite[Theorem 26.12]{ssbd14})
guarantees that with probability $1-\delta/4{2n\choose k}$, for any such $\vec
{a}$,
\[
\frac{1}{m}\sum_{j:T(\vec{x}^{(j)})=1}(\langle\vec{a},\vec{y}^{(j)}\rangle-
z^{(j)})^2>4\mu\epsilon 
\]
and $T$ will not be considered. A union bound over both events for all such $T$ 
establishes that any $T$ that is returned has, with probability $1-\delta/2$, 
both 
\[
\Pr_D[T(\vec{x})=1]\geq\frac{\mu}{4{2n\choose k}}\text{ and }\bbE_D\left[
(\langle\vec{a},\vec{y}\rangle-z)^2\mathlarger{|}T(\vec{x})=1\right]
\Pr_D[T(\vec{x})=1]\leq 8\mu\epsilon 
\]
and thus is as needed. Therefore, overall, with probability $1-\delta$, the 
algorithm considers at least $T^*$ as a candidate to output, and outputs a 
suitable term $T$ and vector $\vec{a}$.
\end{proof}

The main defect of Algorithm~\ref{derp} is that in general it only recovers a
condition with a $\Omega(1/n^k)$-fraction of the possible probability mass of 
the best $k$-DNF condition. This is in stark contrast to both Algorithm~\ref
{realizable-alg} and all of the earlier positive results for condition 
identification~\cite{juba16}, in which we find a condition with probability at 
least a $(1-\gamma)$-fraction of that of the best condition, for any $\gamma$ we
choose. Indeed, we are most interested in the case where the probability of this
event is relatively small and thus a $1/n^k$-fraction is extremely 
small. The main challenge here is to develop an algorithm for the dense and/or 
expected-error regression problem that similarly identifies a condition with 
probability that is a $(1-\gamma)$-fraction of that of the best condition.

Of course, the $O(n^k)$ blow-up in the expected error is also undesirable, but
as indicated by Theorem~\ref{agnostic-reduction}, this is the same difficulty 
encountered in {\em agnostic learning}. Naturally, minimizing the amount by
which constraints are violated is generally a harder problem than finding a 
solution to a system of constraints, and this is reflected in the quality of 
results that have been obtained. The results for such agnostic condition 
identification of $k$-DNFs in the previous work by Juba~\cite{juba16} suffered a
similar blow-up in the error, as indeed do the state-of-the-art algorithms for 
agnostic supervised learning for disjunctive classifiers by Awasthi et al.~\cite
{abs10}. Although Awasthi et al. managed to reduce the increase in error for 
this somewhat different problem to a $\sim n^{k/3}$-factor, it still increases 
polynomially with the number of attributes. We note briefly that a variant of 
Algorithm~\ref{derp} in which we seek $\vec{a}$ satisfying $|\langle\vec{a},\vec
{y}^{(j)}\rangle -z^{(j)}|\leq\epsilon$ for all $j$ satisfying a candidate term 
$T$ solves the ``realizable regression'' (sup norm) variant of Definition~\ref
{condregdef} for dense regression, and {\em does not} suffer this increase of 
the error. 

\section*{Acknowledgements}
I thank Madhu Sudan for originally suggesting the joint problem of learning 
under conditional distributions. I also thank Ben Moseley for many helpful
discussions about these problems.

\bibliographystyle{plain}
\bibliography{robust}

\end{document}